\documentclass[a4paper,conference]{IEEEtran}

\IEEEoverridecommandlockouts
\usepackage[final]{pdfpages}
\usepackage[numbers]{natbib}
\usepackage{amsmath,amssymb,amsfonts}
\usepackage{algorithm}
\usepackage{algcompatible}
\usepackage{graphicx}
\usepackage{textcomp}
\usepackage{xcolor}
\usepackage{subfig}
\usepackage{tabularx,booktabs}

\def\BibTeX{{\rm B\kern-.05em{\sc i\kern-.025em b}\kern-.08em
    T\kern-.1667em\lower.7ex\hbox{E}\kern-.125emX}}
    
\usepackage{amsmath,amsthm,amssymb,bbm,neuralnetwork,hyperref}

\theoremstyle{plain}
\newtheorem{theorem}{Theorem}[section]
\newtheorem{defi}{Definition}[section]
\newtheorem{prop}{Proposition}[section]

\newcommand{\E}{\mathbb{E}}
\newcommand{\pr}{\mathbb{P}}
\newcommand{\R}{\mathbb{R}}

\begin{document}

\title{A Multilinear Sampling Algorithm to Estimate Shapley Values}

\author{%
\IEEEauthorblockN{Ramin Okhrati}
\IEEEauthorblockA{%
\textit{University College London}\\
London, United Kingdom\\
r.okhrati@ucl.ac.uk}
\and
\IEEEauthorblockN{Aldo Lipani}
\IEEEauthorblockA{%
\textit{University College London}\\
London, United Kingdom\\
aldo.lipani@ucl.ac.uk}
}

\maketitle

\begin{abstract}
Shapley values are great analytical tools in game theory to measure the importance of a player in a game. Due to their axiomatic and desirable properties such as efficiency, they have become popular for feature importance analysis in data science and machine learning. However, the time complexity to compute Shapley values based on the original formula is exponential, and as the number of features increases, this becomes infeasible. \citet{castro2009polynomial} developed a sampling algorithm, to estimate Shapley values. In this work, we propose a new sampling method based on a multilinear extension technique as applied in game theory. The aim is to provide a more efficient (sampling) method for estimating Shapley values. Our method is applicable to any machine learning model, in particular for either multi-class classifications or regression problems. We apply the method to estimate Shapley values for multilayer perceptrons (MLPs) and through experimentation on two datasets, we demonstrate that our method provides more accurate estimations of the Shapley values by reducing the variance of the sampling statistics. 
\end{abstract}

\begin{IEEEkeywords}
shapley values, feature attribution methods, sampling
\end{IEEEkeywords}

\vspace{1.5em}
\section{Introduction}
\vspace{0.5em}

Explaining models have becoming an important aspect of data science, in particular, with the growing complexity of machine learning models. 
For motivations, discussions, and different explainability models under a unified framework, we refer to the work of \citet{lundberg2017unified} where the methods LIME \cite{ribeiro2016should}, DeepLIFT \cite{shrikumar2017learning}, Layer-Wise Relevance Propagation \cite{bach2015pixel}, and exact Shapley values \cite{shapley1953value} together with their SHAP methods are explained. 
The main reason for the popularity of Shapley values is that they satisfy certain interesting axioms (which we will discuss in Section \ref{sec:background}), most notably the efficiency axiom, that are desirable in analyzing interpretability and attributions.

It is worth noting that Shapley values are a local interpretability attribution method in the sense that they quantify the explanation of a specific model by assigning a value to each input feature. This value can be then interpreted as the contribution (or relevance) of that feature towards the output of the model. 
Although several non-axiomatic methods are developed in particular for neural networks, an advantage of an axiomatic approach, such as Shapley values, is to have a profound theoretical background that mitigates the risk of misleading or unreliable interpretations. While there are other attribution methods which have their own merits, we specifically focus on estimating Shapley values through sampling methods. 

The main drawback of the calculation of Shapley values using the original definition (aka \textit{exact Shapely values}) is its exponential time complexity. We divide Shapley values estimation methods into two main categories, methods that do not converge towards the exact Shapely values, and methods that do converge.
%
In the first category, we have two types of methods. The first type are semi-closed form solution methods to approximate Shapley values which provide a decent approximation for simple games (see the work of \citet{castro2009polynomial} where they make a comparison of this method with their sampling approach). These solutions are normally based on the central limit theorem (see the work of \citet{owen1972multilinear}) which are computationally cheap. 
The second type of the first category are data-driven methods such as regression and linear based techniques \cite{lipovetsky2001analysis,fatima2008linear}, quantitative input influence approaches \cite{datta2016algorithmic}, and DASP \cite{ancona2019explaining}. 
The second category includes the Shapley sampling method developed by \citet{castro2009polynomial} which in general becomes computationally expensive as the number of feature increases, however, this has the advantage of converging to the exact Shapley values. 

Our method falls within the second category in which we provide a sampling type algorithm to estimate Shapley values based on a multilinear extension technique as applied in game theory \cite{owen1972multilinear}, which converges to the exact Shapley values. 
As mentioned earlier, semi-closed form solutions have already been obtained from this multilinear extension techniques for simple games \cite{owen1972multilinear}, however, to the best of our knowledge, this is the first time that a sampling type algorithm is extracted from the multilinear extension techniques. Furthermore, our algorithm is applicable for calculating the Shapley values of features of any machine learning model. 
In particular, we apply the algorithm for the case of MLPs in classification problems. 

The sampling methods such as the one developed by \citet{castro2009polynomial} are used as a ground truth for Shapley value estimations or attributions, see for instance the work of \citet{ancona2019explaining}. Their experiments show that for up to a certain number of network evaluations, their method provides a more accurate solution. In this paper we compare against the sampling algorithm developed by \citet{castro2009polynomial} and from now on we will refer to this algorithm as \textit{Castro sampling}. 
%

As our experiments show, comparing to the Castro sampling, our method reduces the variance of the estimator which results in a more accurate and time-efficient algorithm. The goal is to present our algorithm in its original form though it could be improved in several directions which we leave to future work. In our experimental analysis, first, we use a dataset with a small number of features in which we can calculate the exact Shapley values. Knowing the exact Shapley values in this case, we can then compare the quality of our method, in terms of both accuracy and time-efficiency against Castro sampling. As the results confirm, the error of our approximation is much lower. Next, we then apply the method also on a dataset with a larger number of features. While it is not possible to obtain the exact Shapley values in this case, we can perform a variance analysis to show that our estimator admits a lower variance compared to that of Castro sampling. 




The reminder of this paper goes as follows. 
In \S~\ref{sec:related_work}, we discuss related works.
In \S~\ref{sec:background}, we introduce the background and preliminaries necessary to explain our method in \S~\ref{sec:owen_shap}. \S~\ref{sec:experiments} is devoted to the experimental setup. Results are presented and discussed in \S~\ref{sec:discussion}. 
We conclude in \S~\ref{sec:conclusion}. 

\section{Related Work}
\label{sec:related_work}
 
 Multilinear extension methods have been already applied in calculating feature attributions. \citet{jones2010multilinear} present a multilinear extension technique, which is then applied to multichoice games in the context of Shapley values using the central limit theorem.  \citet{michalak2013efficient} present analytical formulas for Shapley value-based centrality in both weighted and unweighted networks algorithms. 
 The base of our method is the multilinear extension technique as developed by \citet{owen1972multilinear}. 
 This work is further discussed in the next section. Despite other multilinear extension techniques applied to estimate Shapley values, our method does indeed converge to the exact Shapley values. Although sampling methods should converge to the exact values, the sampling nature still remains there, and it becomes computationally cumbersome in the presence of a large number of features. So the efficiency and applicability of the method boils down to its convergence speed. 
 
 Depending on the structure of the machine learning model, in certain cases, simpler approximations might be possible.  \citet{fatima2008linear} introduce a polynomial time approximation method to estimate Shapley values for weighted voting games. A similar idea is also used by \citet{ancona2019explaining} for explaining deep neural networks where in contrast to the sampling method developed by \citet{castro2009polynomial}, their algorithm does not converge to the exact Shapley values, however, for up to a fixed number of instructions, their algorithm provides a more accurate estimation. 

It is worth noting that there are other attribution methods as well. \citet{lundberg2017unified} introduce the KernelSHAP method, which combines sampling with lasso regression. 
There are also Integrated Gradient \cite{sundararajan2017axiomatic}, and DeepLIFT and its two flavors Rescale and RevealCancel \cite{shrikumar2016not,shrikumar2017learning}. However, among all these methods, only KernelSHAP (without regularizer) converges to the exact Shapley values. 


\section{Background and Preliminaries}
\label{sec:background}

\subsection{Feature Attribution Through Shapely Values}

Consider a machine learning model $\nu$ from  $\R^{n+1}$ into $\R$ with inputs represented by $X=(x_0,x_1,...,x_n)$ in the feature space $\R^{n+1}$, where $x_0=1$ is to account for the bias term, and {$n\geq1$ is a positive integer.}


A prime question in many applied fields such as data science and finance is to provide an algorithm that measures the contribution (also called attribution or importance) of a given feature $x_j$, $j=0,1,...,n$, in the output $\nu(X)$ where $X$ is the input of the machine learning model. But how should such notion of importance be defined? 



Let us have a short excursion to game theory where the notion of contribution is well defined for games. In game theory language, a game is a real-valued function $\xi$ defined on the player set (called features in our setup) $\{x_0,x_1,\dots,x_n\}$, i.e., $\xi:2^{\{x_0,x_1,\dots,x_n\}}\rightarrow \R$. In contrast to machine learning where the inputs are tuples (vectors) of features, in game theory the inputs are set of players. 

Suppose that we want to define a notion of contribution for the player $x_j$, $j=0,1,...,n,$ of a game expressed by $\xi$. Let us denote the contribution of feature $x_j$ by $\Psi_j(\xi)$. It is argued by \citet{shapley1953value} that $\Psi_j(\xi)$ is a reasonable notion of contribution, if it satisfies (at least) the following axioms:
\begin{LaTeXdescription}
\vspace{0.5em}\item[\textbf{Efficiency}:]
${\sum_{j=0}^n\Psi _{j}(\xi)=\xi(X)}$. That is, the sum of all contributions is equal to the output of the game.
\vspace{0.5em}\item[\textbf{Symmetry}:] For any two players $x_i$ and $x_j$, if $\xi(A\cup \{x_i\})=\xi(A\cup \{x_j\})$ for all subset $A$ (of the player set) that do not contain neither $x_i$ nor $x_j$, then $\Psi_j(\xi)=\Psi_i(\xi)$. That is, if two players have the same influence on any coalition their contributions should be the same.
\vspace{0.5em}\item[\textbf{Linearity}:] For two games $\xi$ and $\omega$ with the same player set and for all players $x_j$ and a real number $\alpha$, we have  $\Psi_j(\xi+\omega)=\Psi_j(\xi)+\Psi_j(\omega)$ and $\Psi_j(\alpha \xi)=\alpha \Psi_j(\xi)$. That is, when two independent games are combined, the total contribution of a player is equal to the sum of individual contributions on each game.
\vspace{0.5em}\item[\textbf{Null player (feature)}:] For a player $x_j$ and a game $\xi$, if $\xi(A\cup \{x_j\})=\xi(A)$ for all subsets $A$ (of the player set) that do not contain $x_j$ (such player is called null player) then $\Psi_j(\xi)=0.$
\end{LaTeXdescription}

Remarkably, it is proved by \citet{shapley1953value} that $\Psi_j(\xi)$ must satisfy the following representation:
\begin{equation}
    \label{eq:shapley}
    \Psi_{j}(\xi)=\sum _{A\subseteq X\setminus \{x_j\}}{\frac {|A|!\;(n-|A|)!}{(n+1)!}}\left(\xi(A\cup \{x_j\})-\xi(A)\right),
\end{equation}
where $|A|$ is the cardinality  of the set $A$, i.e., the number of elements in $A$. In other words, the only measurement that satisfies the four axioms is given by Equation \eqref{eq:shapley}, which is called the Shapley value (see  the work of \citet{moulin2004fair} for more explanations). 

Given the desirable axioms, it appears that Shapley values are suitable measurements of a player's contribution in a game which make them also suitable candidates for measuring  contributions of features in machine learning.  Concretely, a machine learning model $\nu$ on a feature space (player set in game theory) can be considered as a game, and a tuple of features $X=(x_0,x_1,\dots,x_n)$ is interpreted as the set of players $\{x_0,x_1,\dots,x_n\}$. Vice versa, for a set of players (each player is interpreted as a feature), we can create a tuple where a missing feature is substituted by zero. The last point is related to the concept of baseline in machine learning where we substitute a missing feature with a zero baseline. Other choices of baselines are also available \cite{izzo2020baseline}, but this does not affect our methodology.

From the above argument, we can formally define the contribution of a feature $x_j$ in a machine learning model $\nu$ through Shapley value as follows:
\begin{defi}
Given a feature $x_j$ and a machine learning model $\nu$, we define its contribution towards $\nu(X)$ as the Shapley value of the feature $x_j$ for $\nu$, that is: 
$$S_{j}(\nu)=\sum _{A\subseteq X\setminus \{x_j\}}{\frac {|A|!\;(n-|A|)!}{(n+1)!}}(\nu(A\cup \{x_j\})-\nu(A)),$$
where $|A|$ is the cardinality  of the set $A$, and with some abuse of notation, $\nu(A\cup\{x_j\})$ and $\nu(A)$ must be understood as the evaluation of $\nu$ for the corresponding tuples obtained respectively from $A\cup\{x_j\}$ and $A$, through replacing a missing feature by zero in the tuples. 
\end{defi}

\subsection{Multilinear Extensions Method}

Next, we provide a brief review of the multilinear extension method as developed by \citet{owen1972multilinear}. In order to do so, first, we need to impose a probabilistic structure on the feature space in the following sense. For a fixed $j=0,1,...,n$ and a random subset $E_j$ of $\{x_0,x_1,...,x_n\}\smallsetminus\{x_j\}$, the probability of any feature (except $x_j$) being in $E_j$ is equal to $q$ where $0\leq q \leq 1$; more precisely, this means that: 
$$\pr(\{x_k\in E_0\})=q,\text{ for all } k=1,2,...,n,$$
$$\pr(\{x_k\in E_n\})=q,\text{ for all } k=0,1,...,n-1,$$
and for $j=1,2,...,n-1$,
$$\pr(\{x_k\in E_j\})=q,\text{ for all } k=0,1,...,j-1,j+1,...,n.$$
Furthermore, we assume that sampling is done so that the events $\{x_k\in E_j\}_{k\neq j, 0\leq k\leq n}$ are mutually independent. Note that because $E_j$ is a random subset, $\nu(E_j)$ becomes a random variable. 


Given the above probabilistic structure, a probabilistic representation of the Shapley values is provided as follows \cite{owen1972multilinear}: 
\begin{equation}\label{eq:owen}
S_j(v)=\int_0^1 e_j(q) \;dq,
\end{equation}
where
\begin{equation}\label{eq:ejq}
    e_j(q)=\E[\nu(E_j\cup\{x_j\})-\nu(E_j)].
\end{equation}
The integral of Equation \eqref{eq:owen} can be interpreted as either Riemann or Lebesgue integral. 
Equations \eqref{eq:owen} and \eqref{eq:ejq} are the result of the multilinear extension method and provide the bases of the two algorithms that we present in the next section.

\section{The Proposed Sampling Algorithms}
\label{sec:owen_shap}
In this section, we present two algorithms based on  the multilinear extension method developed by \citet{owen1972multilinear} (in his honor, we call the algorithms \textit{Owen sampling}). 

\subsection{The Owen Sampling Algorithm}

The first algorithm is based on a discretization of Equation \eqref{eq:owen}. In this algorithm, the number of samples is controlled by two positive integers $Q$ and $M$, and the Shapley values estimators are shown by $\hat{S}(\nu) = (\hat{S}_0(\nu),...,\hat{S}_n(\nu))$. Algorithm \ref{alg:alg1} explains the method in pseudo-code.



\begin{algorithm}[!t]
    \caption{Owen Sampling}\label{alg:alg1}
    \hspace*{\algorithmicindent} \textbf{Input:} $\nu$, $X=(x_0,x_1,\dots,x_n)$, $Q$, $M$\\
    \hspace*{\algorithmicindent} \textbf{Output:} $\hat{S}(\nu) = (\hat{S}_0(\nu),\hat{S}_1(\nu),\dots,\hat{S}_n(\nu))$\\ 
    \begin{algorithmic}[1]
        \STATE $\hat{S}_j(\nu):=0$, for all $j$
        \FOR {$q=0,1/Q,2/Q,\dots,1$}
            \STATE $e_j:=0$, for all $j$
			\FOR {$m=1,2,\ldots,M$}
				\STATE $I^{(q)}_m \gets (b_j : b_j \sim \text{Bern}(q), \;\text{for all} \; j)$
				\STATE $h^{(q)}_{m,j} \gets \nu(I^{(q)}_m\odot X+X^{(j)} )-\nu(I^{(q)}_m\odot X), \text{for all} \, j$
				\STATE $e_j \gets e_j+h^{(q)}_{m,j}, \;\text{for all} \; j$
			\ENDFOR
			\STATE $\hat{S}_j(\nu) \gets \hat{S}_j(\nu)+ e_j, \;\text{for all} \; j$
		\ENDFOR
		\STATE $\hat{S}(\nu) \gets \hat{S}_j(\nu)/(QM), \;\text{for all} \; j$
    \end{algorithmic}
\end{algorithm}

This Algorithm requires as input $X$, $\nu$, and the number of samples which are controlled by the parameters $Q$ and $M$. The parameter $Q$ controls the level of discretization of the integral in Equation \eqref{eq:owen}, and the parameter $M$ controls the accuracy of the estimation of the expected value $e_j(q)$ in Equation \eqref{eq:ejq}. The optimal choices of $Q$ and $M$ are yet to be explored, however, of the two, the parameter $Q$ plays a more significant role in the sense that by increasing $Q$, in general, we increase the total random numbers generated in the inner loop controlled by $M$. But the opposite of this does not occur. In all our experiments, we fix the value of $M=2$, i.e., the total number of samples is equal to $2Q$.

The outer loop in Line 2 of this algorithm approximates the integral of Equation \eqref{eq:owen} by initializing zero values for its integrands $e_j$ in Line 3. If we divide these values by $M$, i.e. $e_j/M$, they are approximations of $e_j(q)$ in Equation \eqref{eq:ejq}. More precisely, the inner loop in Line 4 uses a Monte Carlo estimation method that leads to the estimation of Equation \eqref{eq:ejq}. Since the probabilistic structure of the algorithm is governed by Bernoulli distribution, in Line 5, for each iteration of the inner loop, we generate $n+1$ Bernoulli random numbers independently, which are temporarily stored in the vector $I^{(q)}_m$, i.e., each element of this vector is either zero or one depending on the outcome of the Bernoulli random number.

This vector is used to calculate the marginal contributions $h^{(q)}_{m,j}$. In Line 6, the $\odot$ operator is the element-wise multiplication, and it is applied to the vectors $I^{(q)}_m$ and $X$. The operand $X^{(j)}$ in Line 6  indicates the vector $(0,0,\dots,0,x_j,0,\dots,0)$.
The unnormalized approximated values of the integral in Equation \eqref{eq:owen} are then updated in Line 9. To normalize them we need to divided them by the total number of iterations $QM$. The convergence of this algorithm is guaranteed by the following proposition.

\begin{prop}
\label{prop:prop1}
Suppose that for a fixed $j$, function $q\mapsto e_j(q)$ is Riemann (or Lebesgue) integrable\footnote{This is a technical assumption, and it is satisfied in all practical circumstances.}, i.e., $\int_0^1 |e_j(q)|\; dq<\infty$. Then the estimators of Algorithm \ref{alg:alg1} converge to the true Shapley values as $Q$ and $M$ increase.
\end{prop}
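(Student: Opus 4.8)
The plan is to separate the two independent sources of error in $\hat{S}_j(\nu)$ — the Monte Carlo error governed by $M$ and the discretization (Riemann-sum) error governed by $Q$ — and then control each through a bias–variance decomposition. Throughout I would use two structural facts: (i) $\nu$ is evaluated only on the finitely many coalitions obtainable from $X$, so every marginal contribution $h^{(q)}_{m,j}$ is uniformly bounded by a constant depending only on $\nu$ and $X$; and (ii) the Bernoulli vectors $I^{(q)}_m$ are drawn afresh and independently in every pass of the two loops, so that the whole family $\{h^{(q)}_{m,j}\}_{q,m}$ is mutually independent.

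First I would establish unbiasedness of the inner estimator. For fixed $q$ and $j$, the construction in Line~6 realizes the coalition $E_j\cup\{x_j\}$ for the minuend and $E_j$ for the subtrahend, where $E_j$ contains each feature other than $x_j$ independently with probability $q$; this is exactly the probabilistic structure underlying Equation \eqref{eq:ejq}, so $\E[h^{(q)}_{m,j}]=e_j(q)$ for every $m$. Taking expectations of the final line of the algorithm gives
\begin{equation*}
\E[\hat{S}_j(\nu)] = \frac{1}{QM}\sum_{q}\sum_{m=1}^{M}\E[h^{(q)}_{m,j}] = \frac{1}{Q}\sum_{q} e_j(q),
\end{equation*}
where the outer sum runs over the grid $\{0,1/Q,\dots,1\}$. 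The right-hand side is a Riemann sum for $\int_0^1 e_j(q)\,dq$, so by the assumed integrability of $q\mapsto e_j(q)$ it converges to $S_j(\nu)$ as $Q\to\infty$, for every fixed $M$; hence the bias of the estimator vanishes as $Q\to\infty$.

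Next I would bound the variance. Using independence across $(q,m)$ and writing $\sigma_j^2(q)=\mathrm{Var}(h^{(q)}_{m,j})$,
\begin{equation*}
\mathrm{Var}(\hat{S}_j(\nu)) = \frac{1}{(QM)^2}\sum_{q}\sum_{m=1}^{M}\sigma_j^2(q) = \frac{1}{Q^2 M}\sum_{q}\sigma_j^2(q).
\end{equation*}
Since each $h^{(q)}_{m,j}$ is bounded, there is a constant $C$ with $\sigma_j^2(q)\le C$ uniformly in $q$, and the grid has $Q+1$ points, so $\mathrm{Var}(\hat{S}_j(\nu))\le C(Q+1)/(Q^2M)$, which tends to $0$ as either $Q\to\infty$ or $M\to\infty$. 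Combining the two estimates through
\begin{equation*}
\E\!\left[(\hat{S}_j(\nu)-S_j(\nu))^2\right] = \mathrm{Var}(\hat{S}_j(\nu)) + \left(\E[\hat{S}_j(\nu)]-S_j(\nu)\right)^2,
\end{equation*}
I would conclude that the mean-squared error tends to $0$ as $Q,M\to\infty$, so $\hat{S}_j(\nu)\to S_j(\nu)$ in $L^2$ and hence in probability.

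I expect the main obstacle to be the bias term rather than the variance: the variance is annihilated by either parameter growing, but the discretization bias is removed only by $Q\to\infty$ and rests essentially on the integrability hypothesis — which is precisely why that assumption is imposed. A secondary point demanding care is verifying that Line~6 genuinely reproduces the joint law of $(E_j\cup\{x_j\},\,E_j)$ for each $j$ from the single shared vector $I^{(q)}_m$, i.e. that the coordinate corresponding to $x_j$ is treated consistently in the two evaluations so that $\E[h^{(q)}_{m,j}]=e_j(q)$ holds exactly; establishing this identity is what legitimizes the unbiasedness step on which the whole argument hinges.
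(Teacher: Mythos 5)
Your argument is correct and takes essentially the same route as the paper's proof: it separates the error of Algorithm \ref{alg:alg1} into the inner-loop Monte Carlo error in estimating $e_j(q)$ (controlled by $M$ and the independence of the Bernoulli draws) and the outer-loop discretization error of the Riemann sum approximating $\int_0^1 e_j(q)\,dq$ (controlled by $Q$), which is exactly the decomposition underlying Equations \eqref{eq:owen} and \eqref{eq:ejq}. The only differences are cosmetic --- your bias--variance bound gives convergence in $L^2$ and hence in probability, rather than the almost-sure statement one gets from the law of large numbers --- and you correctly flag the one point that genuinely needs care, namely that the $j$-th coordinate in Line 6 must be handled so that $\E[h^{(q)}_{m,j}]=e_j(q)$ holds exactly.
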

\begin{proof}
See the Appendix. 
\end{proof}

\subsection{Time Complexity Analysis}

Algorithm \ref{alg:alg1} can be improved using parallel programming, however, an implementation method such as this is not our main criterion of comparison. Note that generating a Bernoulli random number is computable in constant time, i.e., $O(1)$ \cite{bringmann2013exact,kachitvichyanukul1988binomial}. Therefore, assuming that the time complexity of calculating $h^{(q)}_{m,j}$ is polynomial, the time complexity of the algorithm is polynomial. This has the same time complexity of the Castro sampling algorithm \cite{castro2009polynomial}, but in order to have a fair comparison, a more detailed investigation is required. 

In the Castro sampling algorithm, in each iteration three major operations are performed: 
\begin{enumerate}
    \item first, a random permutation from the set of $n$ 
    features is drawn using a uniform probability, the time-complexity of this is $O(n)$;
    \item second for a given permutation, the set of predecessors of the features in that permutation is calculated. This is also $O(n)$, and;
    \item finally, the value of the machine learning model is calculated.
\end{enumerate}
By a one to one comparison of Castro sampling algorithm to Algorithm \ref{alg:alg1}, in each iteration, we draw a random sample with time complexity 
$n \cdot O(1) = O(n)$, instead of calculating the predecessors of the features, we calculate the identity $e_j=e_j+h_{m,j}^{(q)}$, for all $j$, that is $O(n)$, and finally we calculate the value of the machine learning model. 
In Algorithm \ref{alg:alg1}, a total of $QM$ samplings is required, while in the Castro sampling algorithm, this is handled by a single parameter, $M_c$. 

Therefore, for a fair comparison, for $QM$ samplings of Algorithm \ref{alg:alg1}, we have to consider $M_c=QM$ ones of the Castro sampling algorithm. 
However, as our experiments show, both the accuracy and the actual running time of Algorithm \ref{alg:alg1} are better than that of the Castro sampling algorithm. An important feature of Algorithm \ref{alg:alg1} is symmetry which can be exploited to improve the Algorithm \ref{alg:alg1} as follows.

\subsection{Improving Owen Sampling through Symmetry}

For $0\leq q\leq 1$, we let $h^{(-q)}_{m,j}$ represent the calculation of the output of the machine learning model as in Line 6 of Algorithm \ref{alg:alg1}, when instead of $q$, we use $1-q$. Looking at Algorithm \ref{alg:alg1}, it is clear that by using this symmetry, we only need to run the outer loop of this algorithm halfway through, i.e., for $q=0,1/Q,...,0.5$, as for each $q$, we can duplicate the calculations of the output of the machine learning model by using $1-q$. 
For instance, suppose that we only have three features $(x_0,x_1,x_2)$, and for a given $q$, we have generated the Bernoulli random sequence $(1,1,0)$ which means that the features $x_0$ and $x_1$ are selected according to this distribution, and feature $x_2$ is absent. This is  equivalent to choosing only $x_2$ based on the random sequence $(0,0,1)$ generated using 
a Bernoulli distribution with parameter $1-q$. This will have two important consequences, first, we only need to generate $MQ/2$ random numbers each with Bernoulli distribution and, second and most importantly, this will further reduce the variance of the estimator which will be discussed in our experimental section. For this reason, we call this algorithm \textit{Halved Owen Sampling}. This is formalized in Algorithm \ref{alg:alg2}.

\begin{algorithm}[!t]
    \caption{Halved Owen Sampling}\label{alg:alg2}
    \hspace*{\algorithmicindent} \textbf{Input:} $\nu$, $X=(x_0,x_1,\dots,x_n)$, $Q$, $M$\\
    \hspace*{\algorithmicindent} \textbf{Output:} $\hat{S}(\nu) = (\hat{S}_0(\nu),\hat{S}_1(\nu),\dots,\hat{S}_n(\nu))$\\ 
    \begin{algorithmic}[1]
        \STATE $\hat{S}_j(\nu):=0$, for all $j$
        \FOR {$q=0,1/Q,2/Q,\dots,0.5$}
            \STATE $e_j:=0$, for all $j$
			\FOR {$m=1,2,\ldots,M$}
				\STATE $I^{(q)}_m \gets (b_j : b_j \sim \text{Bern}(q), \;\text{for all} \; j)$
				\STATE $I^{(-q)}_m \gets 1 - I^{(q)}_m$
				\STATE $h^{(q)}_{m,j} \gets \nu(I^{(q)}_m\odot X+X^{(j)} )-\nu(I^{(q)}_m\odot X), \text{for all} \, j$
				\STATE $h^{(-q)}_{m,j} \gets \nu(I^{(-q)}_m\odot X+X^{(j)} )-\nu(I^{(-q)}_m\odot X), \text{for all} \, j$
				\STATE $e_j \gets e_j + h^{(q)}_{m,j} + h^{(-q)}_{m,j}, \;\text{for all} \; j$
			\ENDFOR
			\STATE $\hat{S}_j(\nu) \gets \hat{S}_j(\nu)+ e_j, \;\text{for all} \; j$
		\ENDFOR
		\STATE $\hat{S}(\nu) \gets \hat{S}_j(\nu)/(QM), \;\text{for all} \; j$
    \end{algorithmic}
\end{algorithm}

%

Algorithm \ref{alg:alg2} is almost identical to Algorithm \ref{alg:alg1} except for few lines. As already discussed, comparing Line 2 of Algorithm \ref{alg:alg1} against Line 2 of Algorithm \ref{alg:alg2}, we now only need to explore $Q/2$ iterations. In Lines 7 and 8, we calculate the output of the model for two cases, $q$ and $1-q$, using the vectors created in Lines 5 and 6. The result of Line 7 and 8 is then added to Line 9. The convergence of this algorithm is guaranteed by the following theorem.

\begin{theorem}
\label{th:owen2}
Suppose that for a fixed $j$, function $q\mapsto e_j(q)$ is Riemann (or Lebesgue) integrable, i.e., $\int_0^1 |e_j(q)|\;dq<\infty$. Then the estimates of Algorithm \ref{alg:alg2} converge to the true Shapley values as $Q$ and $M$ increase.
\end{theorem}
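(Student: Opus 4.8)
The plan is to reduce the convergence of Algorithm \ref{alg:alg2} to that of Algorithm \ref{alg:alg1}, which is already guaranteed by Proposition \ref{prop:prop1}. The crucial observation is the distributional identity behind the antithetic construction in Line 6: since each coordinate of $I^{(q)}_m$ is an independent $\mathrm{Bern}(q)$ variable, each coordinate of $I^{(-q)}_m = 1 - I^{(q)}_m$ is an independent $\mathrm{Bern}(1-q)$ variable. Consequently $h^{(-q)}_{m,j}$, which is computed from $I^{(-q)}_m$ exactly as $h^{(q)}_{m,j}$ is computed from $I^{(q)}_m$, is an unbiased sample of $e_j(1-q)$; that is $\E[h^{(-q)}_{m,j}] = e_j(1-q)$, matching the sample that Algorithm \ref{alg:alg1} would draw at the grid point $1-q$.

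First I would write the estimator produced by Algorithm \ref{alg:alg2} explicitly,
\[
\hat S_j(\nu) = \frac{1}{QM}\sum_{q\in\{0,1/Q,\dots,1/2\}}\sum_{m=1}^M \bigl(h^{(q)}_{m,j} + h^{(-q)}_{m,j}\bigr),
\]
and then verify the bookkeeping of the grid (taking $Q$ even, say). Under the reflection $q\mapsto 1-q$, the half-grid $\{0,1/Q,\dots,1/2\}$ maps onto $\{1/2,\dots,(Q-1)/Q,1\}$, so the half-grid together with its reflection covers the full grid $\{0,1/Q,\dots,1\}$ used by Algorithm \ref{alg:alg1}, with the single fixed point $q=1/2$ counted twice. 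Replacing each $h^{(-q)}_{m,j}$ by an equally distributed sample at $1-q$, the double sum therefore equals the full-grid sum of Algorithm \ref{alg:alg1} plus one extra block of $M$ samples at $q=1/2$.

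Next I would pass to the limits exactly as in the proof of Proposition \ref{prop:prop1}. For fixed $q$, the inner average $\frac{1}{M}\sum_m h^{(q)}_{m,j}$ converges to $e_j(q)$ by the strong law of large numbers (the marginal contributions are i.i.d.\ and bounded, being finitely many evaluations of $\nu$), and likewise $\frac{1}{M}\sum_m h^{(-q)}_{m,j}\to e_j(1-q)$. Summing over the reconstructed full grid and dividing by $Q$ yields the Riemann sum
\[
\frac{1}{Q}\sum_{k=0}^{Q} e_j(k/Q) \longrightarrow \int_0^1 e_j(q)\,dq = S_j(\nu),
\]
which converges by the assumed integrability of $q\mapsto e_j(q)$. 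The only discrepancy from Algorithm \ref{alg:alg1} is the extra midpoint block, whose contribution is $\frac{1}{Q}e_j(1/2) = O(1/Q)$ and hence vanishes as $Q\to\infty$. Combining the two limits gives $\hat S_j(\nu)\to S_j(\nu)$.

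The main obstacle I anticipate is the careful bookkeeping around the normalization constant $QM$ and the double-counting of the midpoint: one must confirm that reusing the same $I^{(q)}_m$ to produce both $h^{(q)}_{m,j}$ and $h^{(-q)}_{m,j}$ does not bias the estimator (it does not, since each summand is individually unbiased for the appropriate $e_j$), and that the leftover $q=1/2$ term is asymptotically negligible rather than a persistent bias. The negative correlation within the antithetic pair $(h^{(q)}_{m,j},h^{(-q)}_{m,j})$ is what drives the variance reduction reported in the experiments, but it plays no role in the convergence argument, which rests only on unbiasedness, the law of large numbers, and Riemann integrability.
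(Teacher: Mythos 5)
Your proposal is correct and follows essentially the same route as the paper: exploit the fact that $1-I^{(q)}_m$ has independent $\mathrm{Bern}(1-q)$ coordinates so that $\E[h^{(-q)}_{m,j}]=e_j(1-q)$, unfold the half-grid by the reflection $q\mapsto 1-q$ into the full grid of Algorithm \ref{alg:alg1}, and conclude by the strong law of large numbers in $M$ together with Riemann-sum convergence in $Q$, the doubled midpoint contributing only $O(1/Q)$. Your explicit remarks that the within-pair dependence of the antithetic samples affects only the variance and not unbiasedness, and that the $q=1/2$ block is asymptotically negligible, are exactly the points that need checking, so the argument is complete.
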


\begin{proof}
See the Appendix.
\end{proof}

For a fair comparison of Algorithm \ref{alg:alg2} against the Castro sampling algorithm, we use similar arguments as done for Algorithm \ref{alg:alg1} where for a $QM$ sample size of this algorithm, we consider an equivalent $M_c=QM$ sample size of the Castro sampling algorithm.
This may sound counter-intuitive because it would appear that this new algorithm uses $QM/2$ samples, i.e., we generate $QMn/2$ random numbers in contrast to $QMn$ random numbers of Algorithm \ref{alg:alg1}. However,
in its inner most loop we calculate the output of the machine learning model twice. Hence, the number of operations is still in the order of $QM$. Therefore 
we consider the same sample size as for Algorithm \ref{alg:alg1}.



\section{Experimental Setup}
\label{sec:experiments}
In this section, we perform several experiments to test our algorithms. Although, the algorithms are applicable to any machine learning model, here we focus only on Multilayer Perceptrons (MLPs); further experiments on more complex networks are left to future work. The software used to run these experiments is available at the following link \url{https://github.com/aldolipani/OwenShap}.

\subsection{Datasets and Preprocessing}

We experiment with the following two datasets:

\textbf{Credit Card Dataset (CC)}. 
This is a financial dataset which is a collection of credit card data used by \citet{yeh2009comparisons}. There is a total number of 29,351 observations where each observation is made of 23 features and a binary target variable. The features are either financial (such as pay related information) or non-financial like age. The target variable is either zero or one with one indicating the default of the credit card account.

\textbf{Modified NIST (MNIST).} 
This is a large database of handwritten digits that is commonly used for training and testing machine learning models \cite{lecun1998gradient}. It was created by ``re-mixing'' the samples from another dataset. Each sample is a black and white image of a handwritten digit. Furthermore, the black and white images are normalized to fit into a 28x28 pixel bounding box. The MNIST dataset contains 70,000 images. 

These two datasets have a different number of features (23 and 784). In order to compare the various sampling algorithms we need to compute the exact Shapely values for at least one dataset. However, both datasets make this calculation infeasible due to their feature sizes. For this reason, we reduce the dimensionality of the CC dataset by only selecting the first 15 features. 





\subsection{MLPs and Training Details}

The activation function of the input layer and hidden layers is always a sigmoid, while the activation function of the output layer is a softmax. 
All MLPs are trained using Adam as optimizer with its default parameters and the binary cross-entropy as loss function.
We use an early stopping criteria to avoid overfitting, where we stop training after the loss on the validation set has not improved over 3 epochs. 
We divide both datasets into 64\% training set, 16\% validation set and 20\% test set.
The rest of the hyper-parameters, i.e., the number of hidden layers and number of neurons in each layer, are chosen via a Monte Carlo sampling of models using the training and validation sets. 

For the CC dataset, we sample 1000 models with a number of hidden layers from 0 to 3 and a number of neurons for each hidden layer from 1 to 15. 
The best model has 2-hidden layers with 13 and 9 neurons each. The accuracy of this model on the CC test set is 0.8247.
For the MNIST dataset, we sample 1000 models with a number of hidden layers from 0 to 3 and a number of neurons for each hidden layer from 25 to 500 at multiples of 25.
The best model has 2-hidden layers with 300 and 25 neurons each. The accuracy of this model on the MNIST test set is 0.9818.

\subsection{Experiments}

\textbf{Approximation Error.}
To evaluate the quality of the different sampling algorithms, we compare them against the exact Shapely values. 
Computing the Shapely values is an expensive operation because it grows exponentially with the number of features. In order to make this feasible, we only perform this analysis on the smaller CC dataset. 
Here, we take a sample of size 50 of the test set and compute the exact Shapely values. Then for each algorithm, as we increase the number of algorithms' samplings, we compute the average of the Mean Squared Errors (MSEs) across the examples. 
In order to make a fair comparison, so that the numbers of samples of the various algorithms are equivalent, we set $M_c = 2m$ for the Castro sampling, where $m$ is a positive integer, and $Q = m$ and $M=2$ for the Owen sampling. 

\textbf{Variance Analysis.}
Besides the quality of the predictions, we perform a complementary analysis of the sampling algorithms by measuring their sample variances. To do this, we first take a sample of the test set of size 50. Then for each sampling algorithm and each example, we estimate 
the Shapely values increasing the sampling sizes from 2 to 200 at steps of 2. 
At each step, we compute the standard deviations of the feature estimates over the previous steps. 
Then, we compute the average across the features. 
Finally, we average across the examples.
Note that the increase in the sample size is done by varying the equivalent number of samples, i.e., $M_c=2Q$. 
As the number of samples increases, a decrease of the averaged sample standard deviations would confirm the convergence of the estimators. 
Note that from Proposition \ref{prop:prop1} and Theorem \ref{th:owen2}, the Owen sampling algorithms converge to the exact Shapley values only when both $M$ and $Q$ increase. By fixing $M=2$, the algorithms converge to an estimation of the Shapley values.

\section{Results and Discussion}
\label{sec:discussion}

\begin{figure*}[!t]
	\centering
	\subfloat{\includegraphics[width=0.495\linewidth]{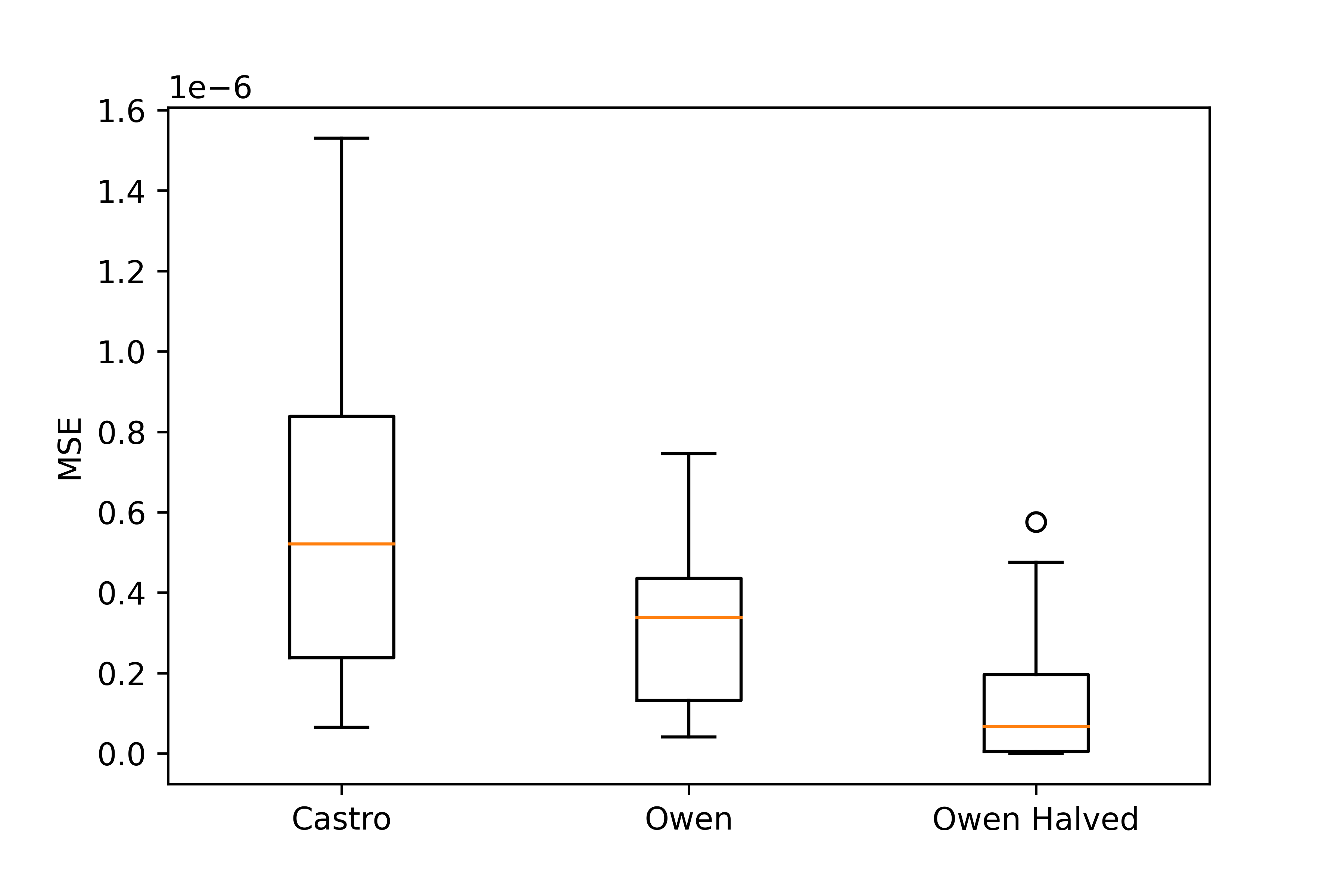}}
	\hfill
	\subfloat{\includegraphics[width=0.495\linewidth]{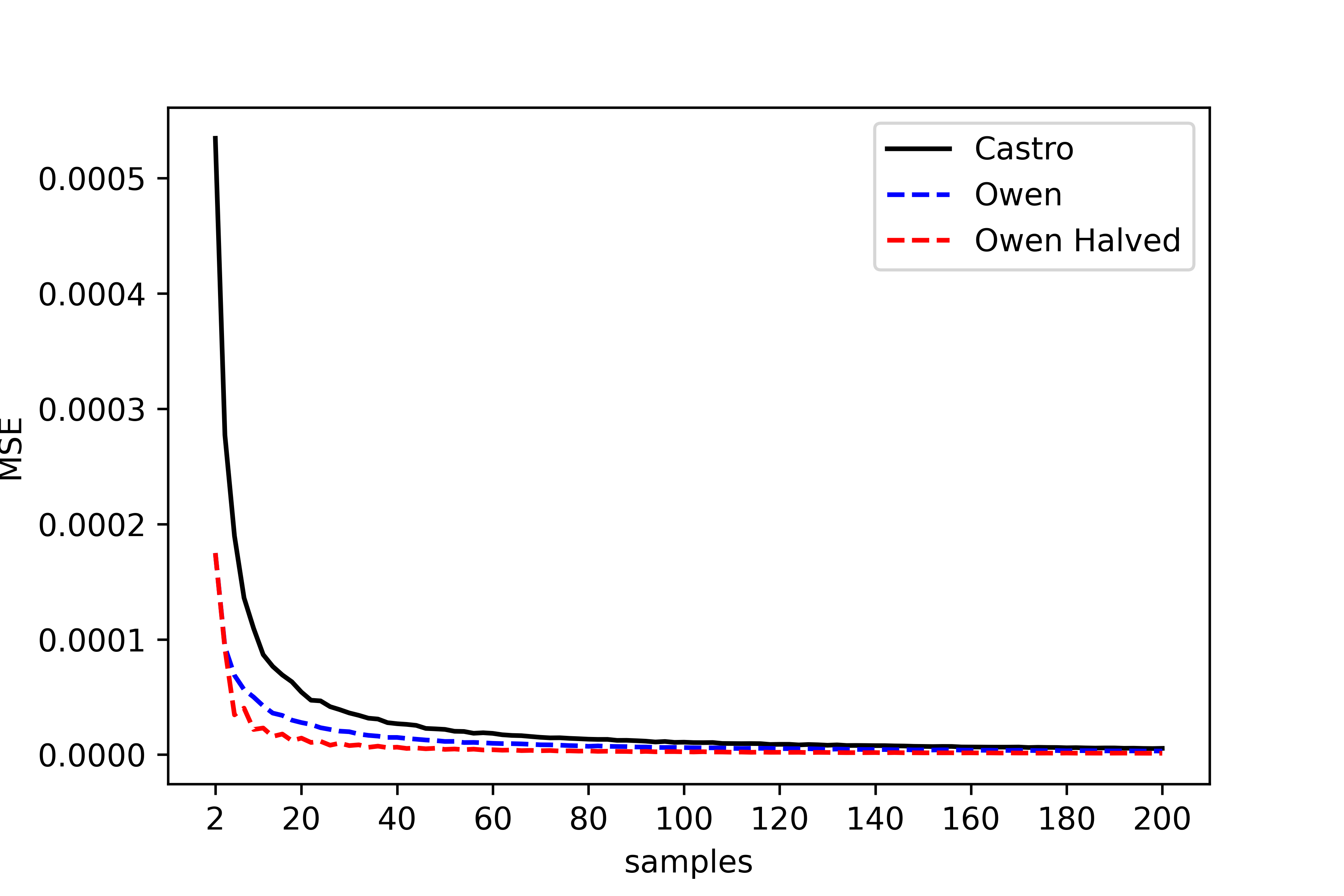}}
    \caption{MSE of the exact Shapely values against the Castro and Owen samplings. The box-plot shows the variance of the error over 50 examples and 2000 samples. The line-plot shows how the error decreases when the number of samples is increased.}
	\label{fig:error_credit}
	\vspace{1em}
\end{figure*}

\begin{table}[!t]
    \centering
    \begin{tabular}{r|c|c|c}
        \toprule
        Algorithm  & Parameters        & MSE ($10^{-6})$ & Time (ms) \\
        \hline
        Castro              & $M_c=2000$          & 0.5575 & 3.004 \\             
        Owen                & $M=2, \; Q=1000$  & 0.3184 &  1.044 \\
        Halved Owen         & $M=2, \; Q=1000$  & 0.1207 &  0.968 \\
        \bottomrule
    \end{tabular}
    \caption{Summery of the results presented in box-plot in Fig.~\ref{fig:error_credit}. Results are averaged over 50 examples of the CC test set.}
    \label{tab:summary_boxplot}
\end{table}

In Figure \ref{fig:error_credit}, we observe the results of the error analysis. Both plots measure the MSE of the exact Shapely values against the outcome of the sampling algorithms. The box-plot is computed over 50 examples from the test set and using 2000 equivalent samples ($M_c = 2000$ for Castro, $Q=1000$ and $M=2$ for Owen). From this box-plot we can observe that the Owen sampling outperforms the Castro sampling. Moreover, the Halved variant of the Owen sampling further improves the original Owen sampling. In Table \ref{tab:summary_boxplot}, we observe the summary of the results shown in the box-plot. Moreover, we also present the expected running-time per example of the algorithms. This result, although may depend on the CPU  (AMD Threadripper 2950X CPU) and GPU (Nvidia Titan RTX) used, shows that the Owen algorithm achieved around a 3-fold speed-up against the Castro algorithm.

In Figure \ref{fig:error_credit}, we also graph a line-plot of the MSEs computed over 50 examples from the CC test set. The x-axis of this plot represents the number of equivalent samples. In this plot we observe that the MSE of the Owen sampling tends to zero faster than that of Castro sampling. 
This plot further confirms the superiority of the Owen sampling and in particular of its Halved version.

\begin{figure*}[!ht]
    \vspace{1.5em}
	\centering
	\subfloat{\includegraphics[width=0.495\linewidth]{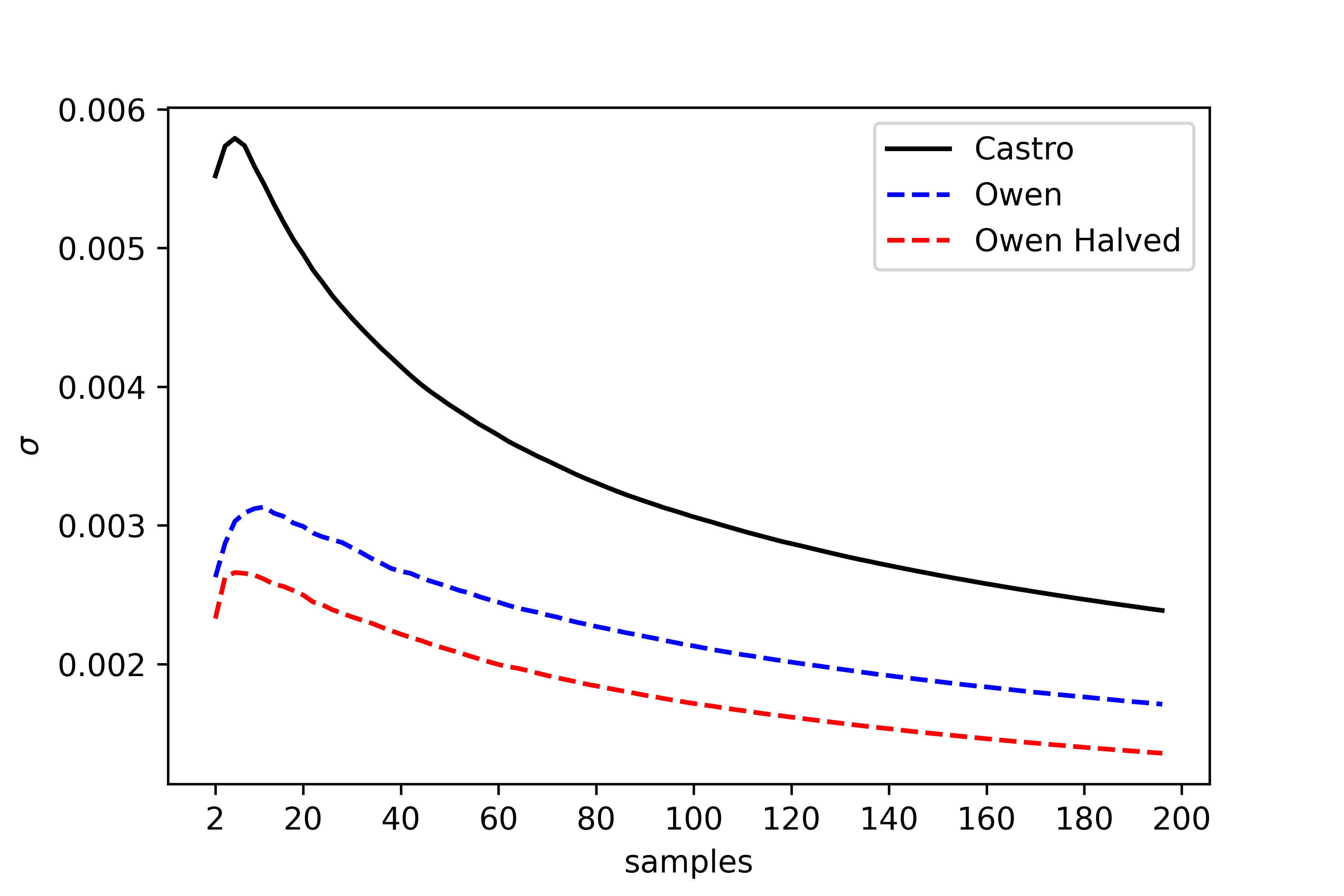}}
	\hfill
	\subfloat{\includegraphics[width=0.495\linewidth]{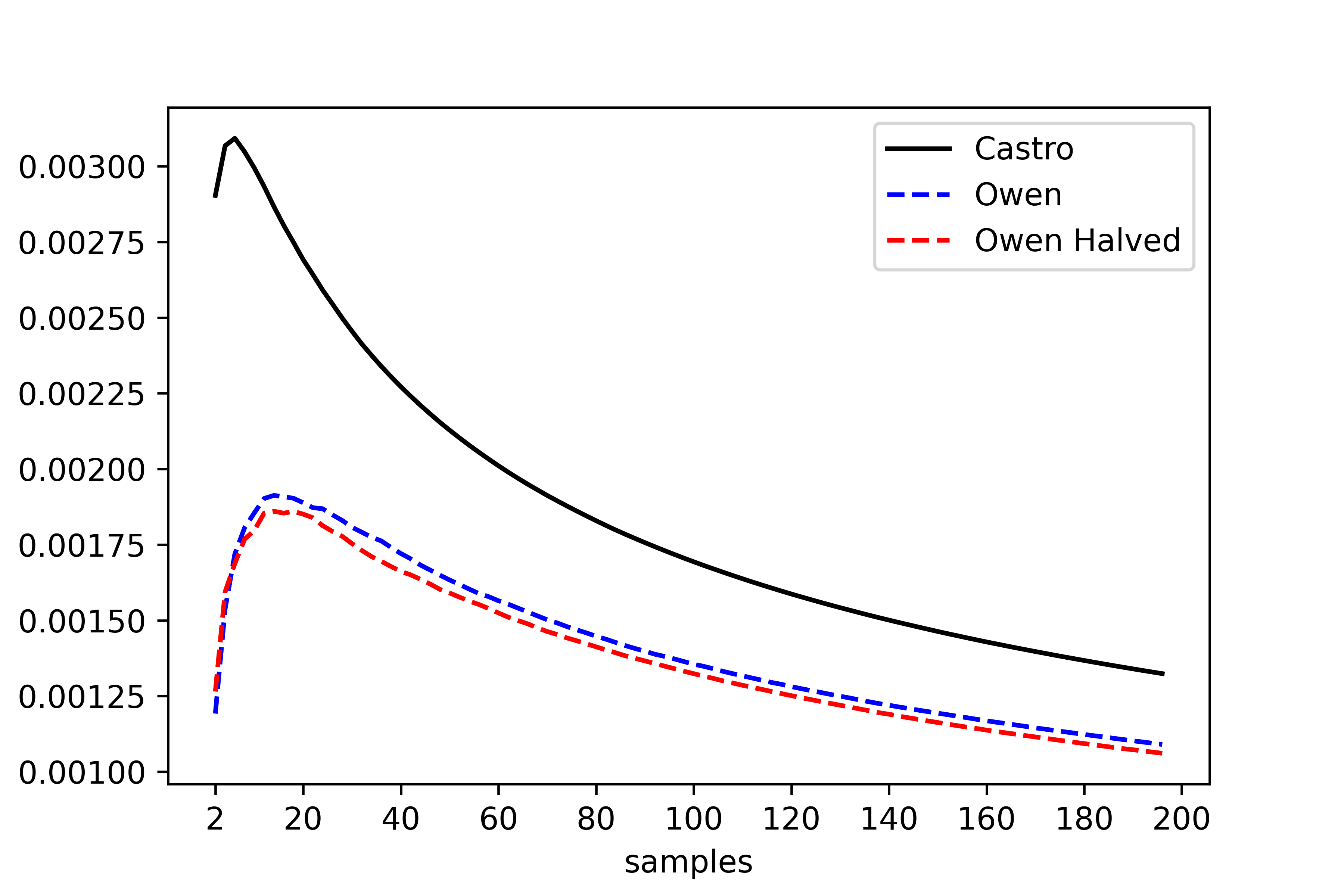}}
    \caption{Variance analysis of the estimators: the left-hand side plot is computed for the CC dataset, and the right-hand one for the MNIST dataset. Both plots are based on 50 examples of the test sets.}
    \label{fig:variance_analysis}
    \vspace{1em}
\end{figure*}

In Figure \ref{fig:variance_analysis}, we present two plots each computed on a different dataset, CC on the left and MNIST on the right. 
The x-axis and y-axis of these plots represent the number of equivalent samples and the sample standard deviation.
As previously mentioned, since all the estimators converge, these sampled standard deviation tend to decrease. 
Here we can observe that as anticipated when discussing the previous experiment, the Castro sampling has a higher variance with respect to the Owen sampling. Moreover, we see that the Halved Owen sampling performs better than the Owen sampling. These results agree with what is observed in the previous experiment.

\begin{figure*}[!ht]
  \vspace{1.5em}
  \centering
  \subfloat[Ground Truth (Castro Sampling with $M_c=5,000$) \label{1a}]{%
       \includegraphics[trim={5cm 0 3cm 0},clip,width=0.98\linewidth]{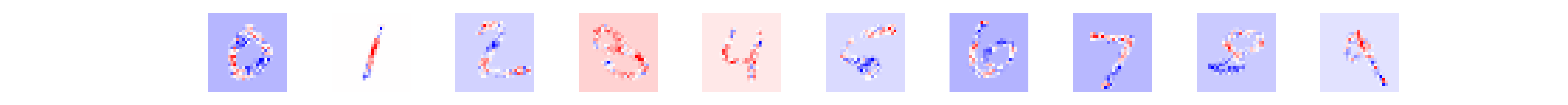}}\\
  \subfloat[Castro Sampling ($M_c=12$)\label{1b}]{%
        \includegraphics[trim={5cm 0 3cm 0},clip,width=0.98\linewidth]{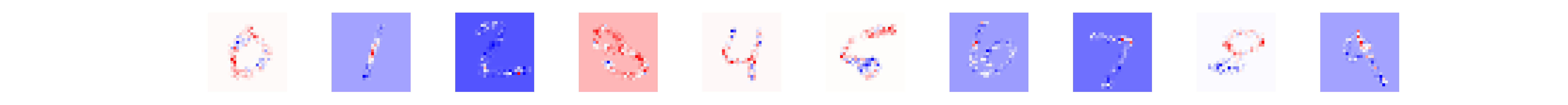}}\\
  \subfloat[Halved Owen Sampling ($M=2$ and $Q=6$)\label{1c}]{%
        \includegraphics[trim={5cm 0 3cm 0},clip,width=0.98\linewidth]{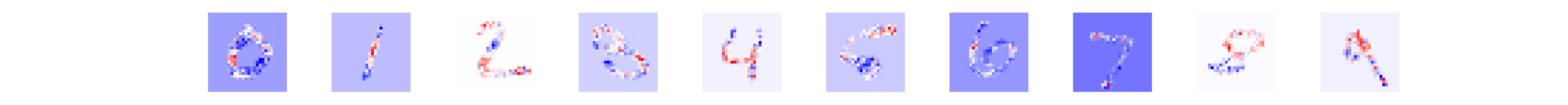}}
  \caption{Saliency maps of 10 randomly selected digits from the test set of the MNIST dataset. We use the top-most row as a reference to compare the results of the following two rows.}
  \label{fig:qualitative_analysis} 
\end{figure*}

Finally, in Figure \ref{fig:qualitative_analysis}, we provide a qualitative analysis of the results of these sampling algorithms. This in order to observe whether the quality of these estimators are perceivable when producing saliency maps -- a common way to show on what the model is basing its prediction. We do this for the MNIST dataset.
In this case we compare the results of the Castro sampling against the Halved Owen sampling with an equivalent number of samples of 12. In order to have a reference, we use as a ground truth the results of the Castro sampling algorithm when computed with a very large number of samples, 5,000 in this case. These images' pixels colored in blue represent the negative contributions and in red the positive ones towards the predicted score. When judging these images we should not focus on their background color because this depends on the maximum and minimum pixel values of the image. In these images we should focus instead on the most saturated pixels, these are in fact those that mostly contribute towards or against the model prediction. 
In these images we can observe that in most of the cases the Owen sampling  identifies better those highly contributing pixels than the Castro sampling.

\section{Conclusion and Future Work}
\label{sec:conclusion}
We have provided a sampling algorithm to efficiently estimate Shapley values that can be also used as a ground truth for comparison purposes, since we have proved the convergence of this algorithm to the exact Shapely values. 
%
The method takes advantage of a variance reduction method and provide more accurate estimations for the Shapley values. 
In all the experiments that we have carried out, our comparison with the Castro algorithm \cite{castro2009polynomial} has been based on equal bases, i.e., we have used the equivalent number of samples. 
In our experiments, MLP models are used to fit with the datasets for classification problems (though regression problems could be considered as well). 
An optimal architecture of the MLPs are found using a cross validation analysis. 
Then we have used these MLPs to test the effectiveness of our results on the corresponding datasets. 
First, we have started with a dataset with small number of features in which we can find the exact measurements of Shapley values. By selecting random instances from the test dataset and calculating the MSE in this case, our algorithms provide clear-cut improvements over the existing sampling algorithm leading to more accurate estimations. While an exact measurement of the Shapley values on datasets with a large number of features is not feasible, we have used an analysis of variance to show that our algorithm provide estimators for which their variance rapidly flattens leading to more accurate estimators. For consistency, the same analysis is also carried out for datasets with small number of features, and the same results are confirmed. 

The experiments and the algorithms  can be still improved in several directions which we leave to future work. More experimental analysis on different datasets could be carried out more complex deep learning architectures than MLPs, since our algorithm could work with any machine learning model. 
The accuracy of our algorithm is controlled by two parameters, in our analysis, we took one of the parameters to be equal to 2. However, more efficient and smart combinations of these parameters might improve the performance of our algorithms, in particular combining this with some statistical analysis, it might be possible to obtain the least number of operations required to reach certain accuracy through finding confidence intervals. 



\bibliographystyle{IEEEtranN}
\bibliography{main}

\begin{thebibliography}{21}
\providecommand{\natexlab}[1]{#1}
\providecommand{\url}[1]{#1}
\csname url@samestyle\endcsname
\providecommand{\newblock}{\relax}
\providecommand{\bibinfo}[2]{#2}
\providecommand{\BIBentrySTDinterwordspacing}{\spaceskip=0pt\relax}
\providecommand{\BIBentryALTinterwordstretchfactor}{4}
\providecommand{\BIBentryALTinterwordspacing}{\spaceskip=\fontdimen2\font plus
\BIBentryALTinterwordstretchfactor\fontdimen3\font minus
  \fontdimen4\font\relax}
\providecommand{\BIBforeignlanguage}[2]{{%
\expandafter\ifx\csname l@#1\endcsname\relax
\typeout{** WARNING: IEEEtranN.bst: No hyphenation pattern has been}%
\typeout{** loaded for the language `#1'. Using the pattern for}%
\typeout{** the default language instead.}%
\else
\language=\csname l@#1\endcsname
\fi
#2}}
\providecommand{\BIBdecl}{\relax}
\BIBdecl

\bibitem[Castro et~al.(2009)Castro, G{\'o}mez, and
  Tejada]{castro2009polynomial}
J.~Castro, D.~G{\'o}mez, and J.~Tejada, ``Polynomial calculation of the shapley
  value based on sampling,'' \emph{Computers \& Operations Research}, vol.~36,
  no.~5, pp. 1726--1730, 2009.

\bibitem[Lundberg and Lee(2017)]{lundberg2017unified}
S.~M. Lundberg and S.-I. Lee, ``A unified approach to interpreting model
  predictions,'' in \emph{Advances in neural information processing systems},
  2017, pp. 4765--4774.

\bibitem[Ribeiro et~al.(2016)Ribeiro, Singh, and Guestrin]{ribeiro2016should}
M.~T. Ribeiro, S.~Singh, and C.~Guestrin, ``"why should i trust you?"
  explaining the predictions of any classifier,'' in \emph{Proceedings of the
  22nd ACM SIGKDD international conference on knowledge discovery and data
  mining}, 2016, pp. 1135--1144.

\bibitem[Shrikumar et~al.(2017{\natexlab{a}})Shrikumar, Greenside, and
  Kundaje]{shrikumar2017learning}
A.~Shrikumar, P.~Greenside, and A.~Kundaje, ``Learning important features
  through propagating activation differences,'' in \emph{Proceedings of the
  34th International Conference on Machine Learning-Volume 70}.\hskip 1em plus
  0.5em minus 0.4em\relax JMLR.org, 2017, pp. 3145--3153.

\bibitem[Bach et~al.(2015)Bach, Binder, Montavon, Klauschen, M{\"u}ller, and
  Samek]{bach2015pixel}
S.~Bach, A.~Binder, G.~Montavon, F.~Klauschen, K.-R. M{\"u}ller, and W.~Samek,
  ``On pixel-wise explanations for non-linear classifier decisions by
  layer-wise relevance propagation,'' \emph{PloS one}, vol.~10, no.~7, p.
  e0130140, 2015.

\bibitem[Shapley(1953)]{shapley1953value}
L.~S. Shapley, ``A value for n-person games,'' \emph{Contributions to the
  Theory of Games}, vol.~2, no.~28, pp. 307--317, 1953.

\bibitem[Owen(1972)]{owen1972multilinear}
G.~Owen, ``Multilinear extensions of games,'' \emph{Management Science},
  vol.~18, no. 5-part-2, pp. 64--79, 1972.

\bibitem[Lipovetsky and Conklin(2001)]{lipovetsky2001analysis}
S.~Lipovetsky and M.~Conklin, ``Analysis of regression in game theory
  approach,'' \emph{Applied Stochastic Models in Business and Industry},
  vol.~17, no.~4, pp. 319--330, 2001.

\bibitem[Fatima et~al.(2008)Fatima, Wooldridge, and Jennings]{fatima2008linear}
S.~S. Fatima, M.~Wooldridge, and N.~R. Jennings, ``A linear approximation
  method for the shapley value,'' \emph{Artificial Intelligence}, vol. 172,
  no.~14, pp. 1673--1699, 2008.

\bibitem[Datta et~al.(2016)Datta, Sen, and Zick]{datta2016algorithmic}
A.~Datta, S.~Sen, and Y.~Zick, ``Algorithmic transparency via quantitative
  input influence: Theory and experiments with learning systems,'' in
  \emph{2016 IEEE symposium on security and privacy (SP)}.\hskip 1em plus 0.5em
  minus 0.4em\relax IEEE, 2016, pp. 598--617.

\bibitem[Ancona et~al.(2019)Ancona, Oztireli, and Gross]{ancona2019explaining}
M.~Ancona, C.~Oztireli, and M.~Gross, ``Explaining deep neural networks with a
  polynomial time algorithm for shapley value approximation,'' in
  \emph{Proceedings of the 36th International Conference on Machine Learning},
  ser. Proceedings of Machine Learning Research, vol.~97.\hskip 1em plus 0.5em
  minus 0.4em\relax PMLR, 09--15 Jun 2019, pp. 272--281.

\bibitem[Jones and Wilson(2010)]{jones2010multilinear}
M.~A. Jones and J.~M. Wilson, ``Multilinear extensions and values for
  multichoice games,'' \emph{Mathematical Methods of Operations Research},
  vol.~72, no.~1, pp. 145--169, 2010.

\bibitem[Michalak et~al.(2013)Michalak, Aadithya, Szczepanski, Ravindran, and
  Jennings]{michalak2013efficient}
T.~P. Michalak, K.~V. Aadithya, P.~L. Szczepanski, B.~Ravindran, and N.~R.
  Jennings, ``Efficient computation of the shapley value for game-theoretic
  network centrality,'' \emph{Journal of Artificial Intelligence Research},
  vol.~46, pp. 607--650, 2013.

\bibitem[Sundararajan et~al.(2017)Sundararajan, Taly, and
  Yan]{sundararajan2017axiomatic}
M.~Sundararajan, A.~Taly, and Q.~Yan, ``Axiomatic attribution for deep
  networks,'' in \emph{Proceedings of the 34th International Conference on
  Machine Learning-Volume 70}, 2017, pp. 3319--3328.

\bibitem[Shrikumar et~al.(2017{\natexlab{b}})Shrikumar, Greenside, and
  Kundaje]{shrikumar2016not}
A.~Shrikumar, P.~Greenside, and A.~Kundaje, ``Learning important features
  through propagating activation differences,'' in \emph{Proceedings of the
  34th International Conference on Machine Learning}, ser. Proceedings of
  Machine Learning Research, vol.~70.\hskip 1em plus 0.5em minus 0.4em\relax
  PMLR, 2017, pp. 3145--3153.

\bibitem[Moulin(2004)]{moulin2004fair}
H.~Moulin, \emph{Fair division and collective welfare}.\hskip 1em plus 0.5em
  minus 0.4em\relax MIT press, 2004.

\bibitem[Izzo et~al.(2020)Izzo, Lipani, Okhrati, and Medda]{izzo2020baseline}
C.~Izzo, A.~Lipani, R.~Okhrati, and F.~Medda, ``A baseline for {Shapley} values
  in {MLPs}: from missingness to neutrality,'' 2020.

\bibitem[Bringmann and Friedrich(2013)]{bringmann2013exact}
K.~Bringmann and T.~Friedrich, ``Exact and efficient generation of geometric
  random variates and random graphs,'' in \emph{International Colloquium on
  Automata, Languages, and Programming}.\hskip 1em plus 0.5em minus 0.4em\relax
  Springer, 2013, pp. 267--278.

\bibitem[Kachitvichyanukul and Schmeiser(1988)]{kachitvichyanukul1988binomial}
V.~Kachitvichyanukul and B.~W. Schmeiser, ``Binomial random variate
  generation,'' \emph{Communications of the ACM}, vol.~31, no.~2, pp. 216--222,
  1988.

\bibitem[Yeh and Lien(2009)]{yeh2009comparisons}
I.-C. Yeh and C.-h. Lien, ``The comparisons of data mining techniques for the
  predictive accuracy of probability of default of credit card clients,''
  \emph{Expert Systems with Applications}, vol.~36, no.~2, pp. 2473--2480,
  2009.

\bibitem[LeCun et~al.(1998)LeCun, Bottou, Bengio, and
  Haffner]{lecun1998gradient}
Y.~LeCun, L.~Bottou, Y.~Bengio, and P.~Haffner, ``Gradient-based learning
  applied to document recognition,'' \emph{Proceedings of the IEEE}, vol.~86,
  no.~11, pp. 2278--2324, 1998.

\end{thebibliography}

\includepdf[pages=-]{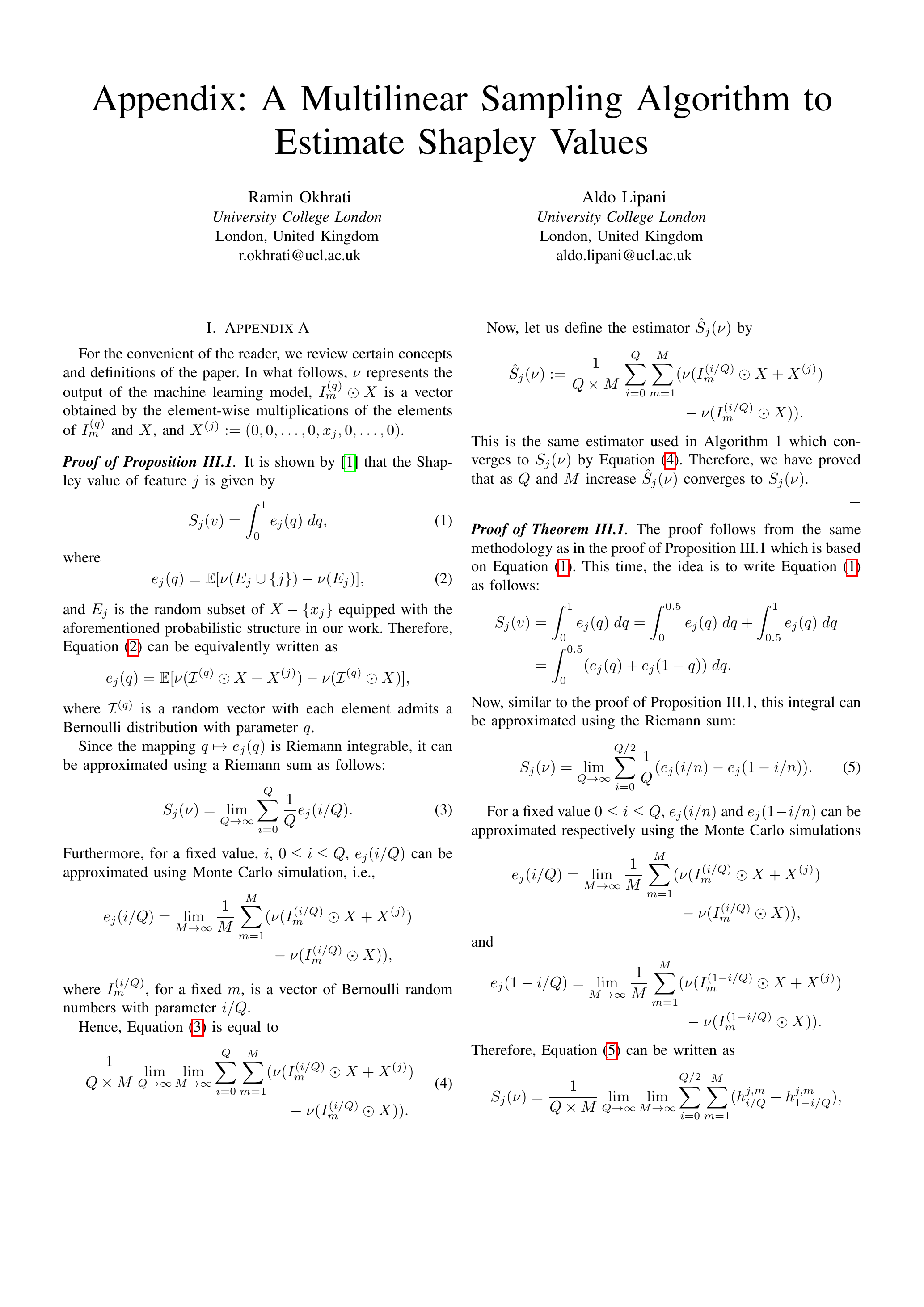}

\end{document}